\tikzset{>={Latex[width=1.2mm,length=1.5mm]}}
\pgfplotsset{compat=newest, ticks=none}
\tikzset{circ2/.style = {circle, draw=black!100, fill=yellow!25, thin, minimum height=5mm, minimum width=8mm, 
  inner sep=1.2mm},
  rect2/.style = {rectangle, draw=black!100, fill=cyan!18, thin, minimum height=7.5mm, minimum width=11mm, 
  inner sep=1.5mm},
  circ/.style = {circle, draw=black!100, fill=red!35, thin, minimum size=7mm},
  outer/.style = {rounded corners=0.2cm, draw=black!100, dashed, inner sep = 3mm},
  outer2/.style = {rounded corners=0.2cm, draw=black!100, dashed, inner sep = 3mm},
  encode/.style = {trapezium, draw=black!100, fill=green!20, trapezium angle=75, shape border rotate=90, shift = {(-0.75, 0)}, minimum width=5mm, minimum height=11mm, align=center},
  decode/.style = {trapezium, draw=black!100, fill=green!20, trapezium angle=-75, shape border rotate=90, minimum width=5mm, minimum height=11mm, align=center},
  rect3/.style = {rectangle, draw=black!100, fill=cyan!1, thin, minimum height=7.5mm, minimum width=11mm, 
  inner sep=1.5mm},
  bluelines/.style={smooth, very thick, blue!40!gray}
  }
\tikzset{every picture/.style={/utils/exec={\fontfamily{lmss}}}}
\xpatchcmd{\phase}{#2}{\hspace{0.8pt}\vphantom{\scalebox{0.8}{\tiny{,}}}#2\hspace{1.4pt}}{}{}
\newdimen\@widthOfTo%
\newdimen\@widthOfImplies%
\pgfmathsetmacro{\@scaleFactorImplies}{\@widthOfTo/\@widthOfImplies}%
\newcommand*{\ScaledImplies}{\mathrel{\raisebox{0.3ex}{\scalebox{\@scaleFactorImplies}{\ensuremath{\Longrightarrow}}}}}%
\newcolumntype{x}[1]{>{\centering\arraybackslash}p{#1}}
\renewcommand\arraystretch{1.2}
\definecolor{LightCyan}{rgb}{0.75,1,1}
\definecolor{LightGrey}{rgb}{0.95,0.95,0.95}
\definecolor{DarkBlue}{rgb}{0,0,0.4}
\definecolor{Yellow}{rgb}{1,1,0.6}
\DeclareMathOperator*{\sign}{sgn}
\let\originalleft\left
\let\originalright\right
\renewcommand{\left}{\mathopen{}\mathclose\bgroup\originalleft}
\renewcommand{\right}{\aftergroup\egroup\originalright}
\newcommand{\bw}{{\bm w}}
\newcommand{\bx}{{\bm x}}
\newcommand{\be}{{\bm e}}
\newcommand{\by}{{\bm y}}
\newcommand{\ba}{{\bm a}}
\newcommand{\bz}{{\bm z}}
\lstdefinestyle{python}{
  belowcaptionskip=1\baselineskip,
  breaklines=true,
  frame=shadowbox,
  rulesepcolor=\color{gray},
  xleftmargin=\parindent,
  language=Python,
  showstringspaces=false,
  basicstyle=\footnotesize\ttfamily,
  keywordstyle=\bfseries\color{deepblue},
  moredelim=**[s][\color{blue}]{'''}{'''},
  commentstyle=\itshape\color{magenta},
  identifierstyle=\color{black},
  stringstyle=\color{red}
}
\lstdefinestyle{output}{
  belowcaptionskip=1\baselineskip,
  breaklines=true,
  frame=L,
  basicstyle=\footnotesize\ttfamily,
  xleftmargin=\parindent
}
\theoremstyle{plain}
\newtheorem{proposition}{Proposition}
\theoremstyle{definition}
\theoremstyle{remark}
\tikzset{>={Latex[width=1.2mm,length=1.5mm]}}
\pgfplotsset{compat=newest, ticks=none}
\tikzset{circ2/.style = {circle, draw=black!100, fill=yellow!25, thin, minimum height=5mm, minimum width=8mm, 
  inner sep=1.2mm},
  rect2/.style = {rectangle, draw=black!100, fill=cyan!18, thin, minimum height=7.5mm, minimum width=11mm, 
  inner sep=1mm},
  circ/.style = {circle, draw=black!100, thin, minimum size=7mm},
  outer/.style = {rounded corners=0.2cm, draw=black!100, dashed, inner sep = 3mm},
  encode/.style = {trapezium, draw=black!100, fill=green!20, trapezium angle=75, shape border rotate=90, shift = {(-0.75, 0)}, minimum width=5mm, minimum height=11mm, align=center},
  decode/.style = {trapezium, draw=black!100, fill=green!20, trapezium angle=-75, shape border rotate=90, minimum width=5mm, minimum height=11mm, align=center},
  rect/.style = {rectangle, draw=black!100, fill=yellow!20, thin, minimum height=7.5mm, minimum width=5mm, 
  inner sep=1.5mm},
  rect_filter/.style = {rectangle, draw=black!100, fill=yellow!20, thin, minimum height=7.5mm, minimum width=5mm, 
  inner sep=0.1mm},
  bluelines/.style={smooth, very thick, blue!40!gray}
  }
\tikzset{every picture/.style={/utils/exec={\fontfamily{lmss}}}}
\title{Polarizing Front Ends for Robust CNNs}
\name{Can Bakiskan \sthanks{Corresponding author: \texttt{canbakiskan@ucsb.edu}}  \quad Soorya Gopalakrishnan \quad Metehan Cekic \quad Upamanyu Madhow \quad Ramtin Pedarsani}
\address{University of California, Santa Barbara, Department of Electrical and Computer Engineering}
\begin{document}

\maketitle

\begin{keywords}
adversarial machine learning, quantization, front-end defense
\end{keywords}

\begin{abstract}
The vulnerability of deep neural networks to small, adversarially designed perturbations can be attributed to their ``excessive linearity.'' In this paper, we propose
a bottom-up strategy for attenuating adversarial perturbations using a nonlinear front end which polarizes and quantizes the data.  We observe that ideal
polarization can be utilized to completely eliminate perturbations, develop algorithms to learn approximately polarizing bases for data, and investigate the effectiveness of the proposed strategy on the MNIST and Fashion MNIST datasets.  
\end{abstract}
\section{Introduction}
\label{sec:intro}

Given the immense impact of deep learning on a diversity of fields, its vulnerability
to tiny {\it adversarial} perturbations \cite{wildpattern2018, szegedy2013intriguing} is of great concern.
For image datasets, for example, such perturbations are almost imperceptible for humans, but they can render state-of-the-art models useless, causing misclassification with high confidence.  State of the art adversarial attacks are variants of gradient ascent, utilizing the local linearity of deep networks.
State of the art defenses are based on adversarial training, using training examples obtained using adversarial attacks, but yield little insight
into, or guarantees of, the achieved robustness.

In this paper, we investigate a systematic, bottom-up approach to robustness, studying a defense based on a nonlinear front end for attenuating adversarial
perturbations before they reach the deep network.  We focus on $\ell_{\infty}$-bounded perturbations. Our approach consists of 
{\it polarizing} the input data into well-separated clusters by projecting onto an appropriately selected basis (implemented using convolutional filters), and then
quantizing the output using thresholds that scale with the $\ell_1$ norm of the basis functions.  For ideal polarization, we prove that perturbations
are completely eliminated.  We introduce a regularization technique to learn polarizing bases from data, and demonstrate the efficacy of the proposed
defense for the MNIST and Fashion MNIST datasets. 

\section{Background} \label{sec:background}

\begin{figure*}[!t]
\begin{subfigure}[b]{0.33\linewidth}
\centering
	\scalebox{0.9}{
	\begin{tikzpicture}
		\begin{axis}[
			axis lines = center,
			set layers=standard,
			xmin=-6, xmax=6, ymin=-6, ymax=6,
			y=0.5cm/1.4,
			x=0.5cm*1,
			xlabel = {$a$},
			ylabel = {$f(a)$},
			x label style= {at ={(axis cs: 6, -1.6)}},
			yticklabels={,,},
			]
			\addplot[domain=2:6, bluelines]{x};
			\addplot[domain=-6:-2, bluelines]{x};
			\addplot[domain=-2:2, bluelines]{0};
			\draw[dotted, bluelines] (axis cs:2,0) -- (axis cs:2,2);
			\draw[dotted, bluelines] (axis cs:-2,0) -- (axis cs:-2,-2);
			\begin{pgfonlayer}{axis background}
				\filldraw[fill=red!6, draw=none]
								(axis cs: -6, -6) rectangle (axis cs: -2, 6);
				\filldraw[fill=red!6, draw=none]
								(axis cs: 2, -6) rectangle (axis cs: 6, 6);
			\end{pgfonlayer}
			\node at (axis cs: 2.2, -1) () { $\epsilon \Vert\bw\Vert_1$};
			\node at (axis cs: -2.2, 1) () { $-\epsilon \Vert\bw\Vert_1$};
			\node[fill=white, inner sep = -3pt, minimum height = 42pt] at (axis cs: 0, -3.5) () {\scriptsize
				\begin{tabular}{c}
				Perturbation \\ removed \\ (for a majority\\ of neurons)
				\end{tabular}
				};
			\node at (axis cs: 4.4, 1.6) () {\scriptsize
				\begin{tabular}{c}
				Perturbation \\ rides on top \\ of signal
				\end{tabular}
				};
		\end{axis}
	\end{tikzpicture}}
	\caption{}
	\label{fig:sparse_act}
\end{subfigure}
\hfill
\begin{subfigure}[b]{0.33\linewidth}
\centering
	\scalebox{0.93}{
	\begin{tikzpicture}
		\begin{axis}[
			axis lines = center,
			set layers=standard,
			axis on top = true,
			xmin=-10, xmax=10, ymin=-0.5, ymax=0.5,
			y=4cm,
			x=0.27cm,
			xlabel = {\small $\displaystyle{\phantom{a}\frac{a}{\Vert\bw\Vert_1}}$},
			x label style= {at ={(axis cs: 11.5, -0.25)}},
			yticklabels={,,},
			]
			\addplot[domain=-3:3, fill=gray!30, color=gray!30] {0.8*gauss(0,0.7)};
			\addplot[domain=7:10, fill=gray!30, color=gray!30] {0.3*gauss(8.5,0.4)};
			\addplot[domain=-10:-7, fill=gray!30, color=gray!30] {0.3*gauss(-8.5,0.4)};
			\addplot[domain=-5:5, bluelines]{0};
			\addplot[domain=5:10, bluelines]{0.4};
			\addplot[domain=-10:-5, bluelines]{-0.4};
			\draw[bluelines] (axis cs:5,0) -- (axis cs:5,0.4);
			\draw[bluelines] (axis cs:-5,0) -- (axis cs:-5,-0.4);
			\node[red!60!black] at (axis cs: 4, -0.1) () { $\epsilon$};
			\node[red!60!black] at (axis cs: 6, -0.1) () { $\epsilon$};
			\node[red!60!black] at (axis cs: 5, -0.22) () {\scriptsize
				\begin{tabular}{c}
				Danger \\[-2pt] zone
				\end{tabular}
				};
			\draw[<->, red!60!black] (axis cs: -3, -0.05) -- (axis cs: -4.95, -0.05);
			\draw[<->, red!60!black] (axis cs: -5.05, -0.05) -- (axis cs: -7, -0.05);
			\node[red!60!black] at (axis cs: -4, -0.1) () { $\epsilon$};
			\node[red!60!black] at (axis cs: -6, -0.1) () { $\epsilon$};
			\draw[<->, red!60!black] (axis cs: 3, -0.05) -- (axis cs: 4.95, -0.05);
			\draw[<->, red!60!black] (axis cs: 5.05, -0.05) -- (axis cs: 7, -0.05);
			\node[] at (axis cs: -5, 0.4) () {\scriptsize
				\begin{tabular}{c}
				Polarized histogram \\of $\bw^T\bx/\Vert\bw\Vert_1$
				\end{tabular}
				};
			\draw[->] (axis cs: -2.5, 0.32) -- (axis cs: -1, 0.25);
			\draw[->] (axis cs: -7, 0.32) -- (axis cs: -8, 0.25);
			\node[] at (axis cs: 3.23, 0.4) () {\small $f(a)$};
			\draw[->] (axis cs: 3.9, 0.35) -- (axis cs: 4.8, 0.3);
			\node[blue!60!black] at (axis cs: 9.6, 0.45) () {$1$};
			\node[blue!60!black] at (axis cs: 0.5, -0.05) () {$0$};
			\node[blue!60!black] at (axis cs: -9.6, -0.45) () {$-1$};
		\end{axis}
	\end{tikzpicture}}
	\caption{}
	\label{fig:polarization}
\end{subfigure}
\hfill
\begin{subfigure}[b]{0.3\linewidth}
    \centerline{\includegraphics[width=\linewidth]{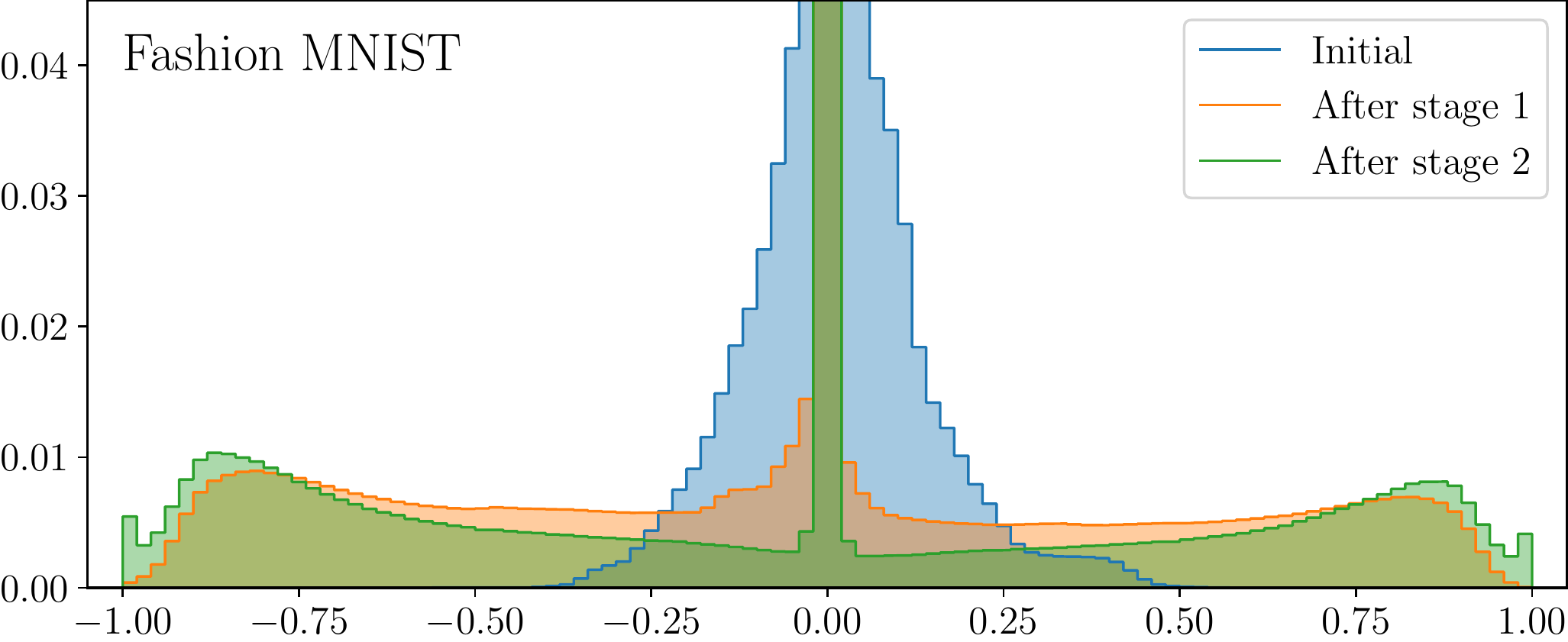}}
    \centerline{\includegraphics[width=\linewidth]{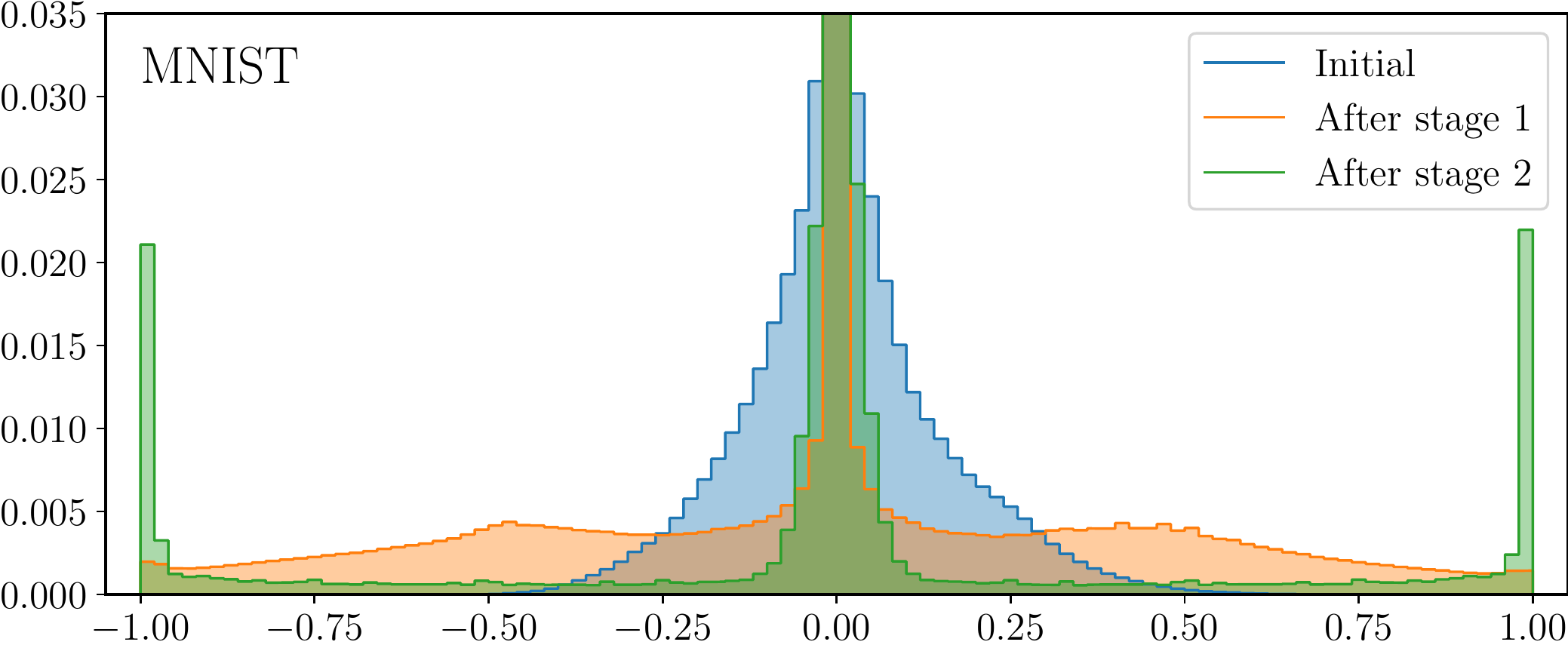}}
  \caption{}
  \label{fig:hist}
\end{subfigure}
\hspace{1pt}
\caption{(a) Activation sparsity (Eq.\ \ref{eq:sparse_act}) alone is not sufficient to achieve robustness: perturbations can ride on top of strongly activated neurons (shaded region). (b) Polarization of neural activity can fully eliminate perturbations. For the shown hypothetical histogram (gray) of $\bw^T\bx/\Vert\bw\Vert_1$, a ternary activation (blue) is effective. (c) Probability distribution of normalized front-end filter outputs $a_k / {\norm{\bw_k}_1} $.}
\end{figure*}

Suppose we have a classifier that takes in inputs $\bx \in \mathbb{R}^N$, and outputs predictions (confidence scores for M classes) $\by \in [0,1]^M$.  Our goal is to defend against malicious inputs of the form $\bx+\be$, where $\be\in \mathbb{R}^N$ is a small perturbation that aims to cause misclassification. Formally, we can describe such adversarial attacks as a maximization problem:
\begin{equation} \label{attacker} 
\underset{\bm{e} \in \mathcal{S}}{\mathrm{max}} \; L(\bm{\theta}, \bm{x}+\bm{e}, \bm{y}_\mathrm{true}),
\end{equation}
\noindent where $L$ is a loss function, $\bm{\theta}$ denotes network weights and biases and $\bm{y}_\mathrm{true}$ is the vector of true labels. The adversary aims to find the perturbation that maximizes $L$, subject to the condition that 
$\be$ is chosen from a set
$\mathcal{S}$ (typically $\ell_p$ bounded). In this paper, we focus on $\ell_{\infty}$ bounded attacks: $\quad \norm{\bm{e}}_\infty \leq \epsilon$ for
an ``attack budget'' $\epsilon > 0$. Furthermore, we assume a ``white box'' attack, in which the adversary has full knowledge of the network structure and weights.\\[-4pt]

\noindent
{\bf Attacks:} State of the art $\ell_{\infty}$ bounded attacks (used in our evaluations) are all based on gradient ascent on the cost function in (\ref{attacker}). The {\it Fast Gradient Sign Method (FGSM)}  \cite{harnessing2015}, computes the perturbation by
\begin{equation} \label{fgsm}
    \bm{e}= \epsilon \cdot \text{sign}(\nabla_{\bm{x}} L(\bm{\theta},\bm{x},\bm{y}))
\end{equation}
An iterative version of FGSM known as the {\it Basic Iterative Method (BIM)} \cite{iterative2017}  finds the perturbation as
\begin{equation}
    \bm{e}_{i+1}= \text{Clip}_{\epsilon} \Big( \bm{e}_{i} + \alpha \cdot \text{sign}(\nabla_{\bm{x}} L(\bm{\theta},\bm{x}+\bm{e}_i,\bm{y})) \Big)
\end{equation}
where $\alpha$ is the step size for each iteration, and $\epsilon$ is the overall $\ell_\infty$ attack budget.
It was noted in \cite{madry} that BIM is a formulation of Projected Gradient Descent (PGD), a well-known method in convex optimization. The PGD attack suggested in \cite{madry} employs  BIM with multiple random starting points sampled from
a uniform distribution in the $\epsilon$ box around the data point.  We term this scheme {\it PGD with Restarts.}\\[-4pt]

\noindent
{\bf Defenses:} Defenders seek to minimize (\ref{attacker}), so that learning in an adversarial setting may be viewed as a minimax game. A number of defense mechanisms have been proposed, only to be defeated by stronger adversaries \cite{carlini2016distillationrefuted,obfuscated}. The current state of the art defense employs retraining with adversarial examples \cite{madry}. 
However, there is no design intuition as to why and how perturbations are being controlled as they flow up the network. It is also computationally intensive, slowing down training by an order of magnitude. 
A more efficient and interpretable line of defenses employ data preprocessing prior to the network. For example, sparsity-based preprocessing was shown to be effective for linear classifiers \cite{isit2018} and neural networks \cite{iclrw2018,tsp2019}.
More recently, \cite{menet} proposed preprocessing by randomly erasing pixels of the image, followed by reconstruction using well-known matrix estimation methods. 
When combined with adversarial training, \cite{menet} achieves state-of-the-art performance on MNIST, CIFAR-10, SVHN, and Tiny-ImageNet datasets.

A number of quantization-based defense methods have been proposed in literature, within the neural network \cite{defensive_quantization,rakin2018defend} and as a front end \cite{qusecnet, discretization_based, limitations_discretization,randomized_discretization}. The key difference in our proposed strategy is that we employ polarization prior to quantization, which enables theoretical guarantees on robustness (Section \ref{sec:rationale}).  
\\[-4pt]

\noindent
{\bf Gradient Masking:} The use of non-differentiable functions or functions with a saturation region can cause state of the art gradient-based attacks to falter.
However, defenses that rely on such ``gradient masking'' are not robust: they are easily circumvented by the attacker, as shown in \cite{obfuscated},
by replacing the non-differentiable function by a differentiable approximation. 

We test our defense using the gradient approximation methods of \cite{obfuscated}, replacing non-differentiable functions with identity in the gradient calculations. We have also performed experiments with other differentiable approximations to our quantization function, but found identity approximation to be the most effective for the attack.

\section{Polarization-based Defense} \label{sec:rationale}

We investigate a defense based on a front end which preprocesses the inputs via a linear transformation followed by a nonlinear activation $f$. 
Following convention, the linear operation of a particular filter is termed a \textit{neuron}. Consider a typical front-end neuron with weights $\bw$, and scalar output $a$. For perturbed input $\bx+\be$ with $\ell_\infty$ bound $\Vert\be\Vert_\infty < \epsilon$, $a$ contains two components: desired signal $\bw^T\bx$, and an output perturbation $\bw^T\be$ that is constrained in magnitude: $\left|\bw^T\be\right| \leq \Vert\be\Vert_\infty \Vert\bw\Vert_1 \leq \epsilon \Vert\bw\Vert_1$ due to H\"{o}lder's inequality. For the defense to be successful, the nonlinearity $f$ must be chosen such that $f(a=\bm{w}^T(\bm{x}+\bm{e})) \approx f(\bw^T\bx)$. 

One design approach is to promote sparse activations by increasing the threshold for neurons to fire, which makes it difficult for a small perturbation to induce firing:
\begin{equation} \label{eq:sparse_act}
f(a) = \begin{cases}
			0, \quad |a| \leq\epsilon \Vert\bw\Vert_1\\
			a, \quad \text{otherwise.}
		\end{cases}
\end{equation}
While this method helps (see \cite{isit2018,iclrw2018,tsp2019} for a similar approach), Fig.\ \ref{fig:sparse_act} shows why it cannot be completely successful. When a neuron resides near the middle of the unshaded region, no perturbation can change the signal output ($f(a)=0$). However, neurons with a strong desired signal component (large $|\bw^T\bx|$) can serve as hosts for the perturbation, allowing it to propagate through the defense.
Hence activation sparsity can only be a part of the solution.

What if we could somehow \textit{polarize} neural activity to obtain well-separated clusters of neurons? Consider for instance the three clusters of activations shown in Fig.\ \ref{fig:polarization}. In such a scenario, we can completely eliminate perturbations by using a quantized nonlinearity (in this case, ternary quantization). Note that it is important for neurons to avoid the ``danger zones'' of width $2\epsilon$ shown in the figure: this ensures that perturbations cannot switch data from one quantization level to the next. These observations are formalized in the following proposition.

\begin{proposition}
Suppose the front end polarizes activations into a multimodal distribution with $L$ clusters, with minimum inter-cluster separation $d > 2\epsilon\Vert\bw\Vert_1$. Let $c_1 < c_2 < \dots c_{L-1}$
denote the midpoints between adjacent clusters.
Then the following $L$-level quantizer (with thresholds at $c_i$)  completely eliminates perturbations with $\ell_\infty$ norm smaller than $\epsilon$:
\begin{equation}\label{eq:quant}
f(a) = \frac{1}{2} \sum_{i=1}^{L-1} \emph{sign} (a-c_i).
\end{equation}
\end{proposition}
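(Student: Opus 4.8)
The plan is to prove the stronger pointwise statement that the quantizer output is literally unchanged by the perturbation, i.e.\ $f(\bw^T(\bx+\be)) = f(\bw^T\bx)$ for every admissible $\be$. Everything reduces to one geometric fact: the clean activation $a_0 := \bw^T\bx$ sits strictly farther than $\epsilon\Vert\bw\Vert_1$ from \emph{every} threshold $c_i$, so a bounded output perturbation can never carry it across a threshold, and each $\text{sign}(\cdot - c_i)$ term in (\ref{eq:quant}) is preserved. I would begin by recording the output-perturbation bound already derived above: by H\"older's inequality $|\bw^T\be| \le \Vert\be\Vert_\infty\Vert\bw\Vert_1 < \epsilon\Vert\bw\Vert_1$. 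Writing $a = a_0 + \bw^T\be$ for the perturbed activation, this says $|a - a_0| < \epsilon\Vert\bw\Vert_1$.

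Next I would make the cluster geometry precise. Suppose $a_0$ lies in cluster $m$. Since adjacent clusters are separated by $d > 2\epsilon\Vert\bw\Vert_1$ and each $c_i$ is the midpoint between clusters $i$ and $i+1$, the threshold $c_i$ lies at distance at least $d/2 > \epsilon\Vert\bw\Vert_1$ from the near edge of each of those two clusters. Applying this to the two thresholds flanking cluster $m$, namely $c_{m-1}$ and $c_m$, gives $|a_0 - c_{m-1}| \ge d/2 > \epsilon\Vert\bw\Vert_1$ and $|a_0 - c_m| \ge d/2 > \epsilon\Vert\bw\Vert_1$; any other threshold is separated from $a_0$ by at least one whole cluster plus another gap, hence is even farther. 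Therefore $|a_0 - c_i| > \epsilon\Vert\bw\Vert_1$ for \emph{all} $i \in \{1,\dots,L-1\}$.

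Finally I would combine the two bounds. For each $i$ we have $|a - a_0| < \epsilon\Vert\bw\Vert_1 < |a_0 - c_i|$, so $a$ remains on the same side of $c_i$ as $a_0$ and $\text{sign}(a - c_i) = \text{sign}(a_0 - c_i)$. Substituting into (\ref{eq:quant}) and summing over $i$ yields $f(a) = f(a_0)$, which is exactly the assertion that perturbations with $\Vert\be\Vert_\infty < \epsilon$ are completely eliminated. (As a sanity check, the same sign pattern shows $f$ is piecewise constant with $L$ distinct levels, one per cluster, so ``unchanged output'' indeed means ``correct quantization level.'')

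The main obstacle -- and really the only step needing care -- is the geometry in the second paragraph: one must confirm that the midpoint placement of $c_i$ together with $d > 2\epsilon\Vert\bw\Vert_1$ forces every threshold out of the $\epsilon\Vert\bw\Vert_1$-neighborhood of each occupied cluster, and that the minimum distance over all thresholds is attained at one of the two flanking $c_{m-1}, c_m$ rather than at some distant threshold. Once that is nailed down, the H\"older bound and the local constancy of each $\text{sign}(\cdot - c_i)$ make the conclusion immediate.
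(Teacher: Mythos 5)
Your proof is correct and takes essentially the same route as the paper's: bound the output distortion by $\epsilon\Vert\bw\Vert_1$ via H\"{o}lder's inequality, and observe that the midpoint thresholds sit at distance at least $d/2 > \epsilon\Vert\bw\Vert_1$ from every occupied cluster, so no $\sign(\cdot - c_i)$ term can flip. The paper compresses this into a three-sentence sketch that omits the explicit midpoint geometry; your version just fills in that step.
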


\begin{proof}

Since we use a quantizing nonlinearity, perturbations can cause distortion only if the output switches quantization levels. We know that for a perturbation $\be$ with $\ell_\infty$ budget $\epsilon$, the maximum output distortion is $\epsilon\Vert\bw\Vert_1$. Therefore, if clusters are separated by a distance of $2\epsilon\Vert\bw\Vert_1$, perturbations cannot propagate through the defense.
\end{proof}
\vspace{-2pt}

This result motivates a second design approach, where we seek a neural basis in which outputs are well-polarized for clean inputs, with clusters of $\bw^T\bx/\Vert\bw\Vert_1$ separated by at least 2$\epsilon$ as shown in Fig. \ref{fig:polarization}. We can then choose a piecewise constant nonlinearity (Eq.\ \ref{eq:quant}) to eliminate the effect of perturbations.
Equipped with these design principles, we now detail training procedures to learn polarizing bases from data.

\subsection{Implementing a Polarizing Front End}
\label{sec:description}

\begin{figure}[!htbp]
  	\centering
	\begin{tikzpicture}	[
		pre/.style={=stealth',semithick, dotted},
		post/.style={->,shorten >=1pt,>=stealth',semithick, dotted}
		]
	\node[] (x) {$\bx+\be$};
	\node[rect_filter] (conv) [right = 4mm of x] {\footnotesize{\begin{tabular}{c}Polarizing \\[-1.5pt] filters \\[-1.5pt] $[\bw_1, \dots \bw_K]$\\[1pt]\end{tabular}}};
	\node[rect] (normalize) [right = 5mm of conv] {$\frac{1}{\norm{\bm{w}_k}_1}$};
	\node[rect] (quantize) [right = 5mm of normalize] {$Q(\cdot)$};
	\node[rect2] (w) [right = 4mm of quantize] {\small{Classifier}};
	\node[] (z) [below right = -3mm and 1.5mm of normalize] {};
	\node[outer, inner sep =2mm, fit = (conv) (normalize) (quantize), label=above:\small{Front end}] (frontend) {};
	\node[] (out) [right = 2mm of quantize] {
		};
	\draw[->] (x) --node[anchor=south]{} (conv); 
	\draw[->] (conv) --node[anchor=north, label={[label distance=0.35mm]above:{$\ba$}}]{} (normalize);
	\draw[->] (normalize) --node[anchor=north, label={[label distance=0.35mm]above:{$\bz$}}]{} (quantize);
	\draw[->] (quantize) --node[anchor=south]{} (w);
	\end{tikzpicture}
	\caption{Block diagram of front end defense, showing a polarizing filter followed by $\ell_1$ normalization and quantization.}
  	\label{fig:system}
\end{figure}
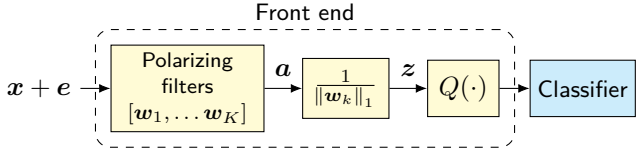

We employ a front end (shown in Fig. \ref{fig:system}) which uses convolutional filters to learn polarized and quantized latent representations of data. For front end neuron $\bw_k$, let $z_k = a_k /\Vert\bw_k\Vert_1$ denote the normalized activation. We seek a multimodal distribution for $\bz$, with clusters separated by at least $2\epsilon$. We achieve this by training with \textit{bump regularizers} $B_1(\cdot)$ and $B_2(\cdot)$ which promote polarization of data. We train in three stages by minimizing the modified loss function:
\begin{equation*}
	\mathcal{L}(\bm{y}, \bm{y}_{\mathrm{true}}, \bm{z}) = \mathcal{L}_{CE}(\bm{y},\bm{y}_{\mathrm{true}}) + \frac{\lambda}{K} \sum_{k=1}^{K} {B(z_k)}.
\end{equation*}  
where $\mathcal{L}_{CE}$ is the cross-entropy loss determined by the true label and outputs of the classifier, $K$ is the number of neurons, $\bz$ is the vector of activations of all neurons $[z_1, \dots z_K]$, $B$ is the regularizer and $\lambda$ is a scaling coefficient. These stages can be described as follows:
\begin{enumerate}
	\item We start by training the polarizer without using any quantization. The front end filters are initialized from a uniform distibution described in \cite{he2015delving}. Due to the random initialization, normalized activations are typically clustered around zero initially (shown in Fig. \ref{fig:hist}). Next we incorporate a bump regularizer $B(\cdot)=B_1(\cdot)$ to drive the normalized activations away from the origin, pushing $\bz$ towards the endpoints $-1$ and $1$. 

	\begin{equation*}
		B_1(z_k) = e^{-{z_k^2}/{2\sigma_1^2}}.
	\end{equation*}
	\item  After achieving a sufficiently even level of distribution throughout the interval $[-1,1]$, we switch to the second bump regularizer $B(\cdot)=B_2(\cdot)$, aimed at pushing the normalized activations away from the quantization thresholds $\pm c$ and polarizing $\bz$ into three clusters centered at $-1$, $0$ and $1$. 
	\begin{equation*}
		B_2(z_k) = e^{-{(z_k-c)^2}/{2\sigma_2^2}} + e^{-{(z_k+c)^2}/{2\sigma_2^2}}.
	\end{equation*}
	\item Now we introduce the quantization function $f_2(\cdot)$ described in Eq.~\ref{eq:ternary}. We also freeze and stop training the filters in the front end, and remove the regularizer. We train the classifier to let the weights adapt to the quantization.
	\begin{equation} \label{eq:ternary}
		f_2(z_k) = 0.5\sign(z_k-c) + 0.5\sign(z_k+c).
  \end{equation}
\end{enumerate}
For testing, we continue using the quantized activation in \eqref{eq:ternary} to eliminate perturbations. Details regarding the choice of parameters such as  $\lambda$, $c$, $\sigma_1$ and $\sigma_2$ are given in Section~\ref{sec:results}.

\begin{figure}[t]
    \centerline{\includegraphics[width=0.85\columnwidth]{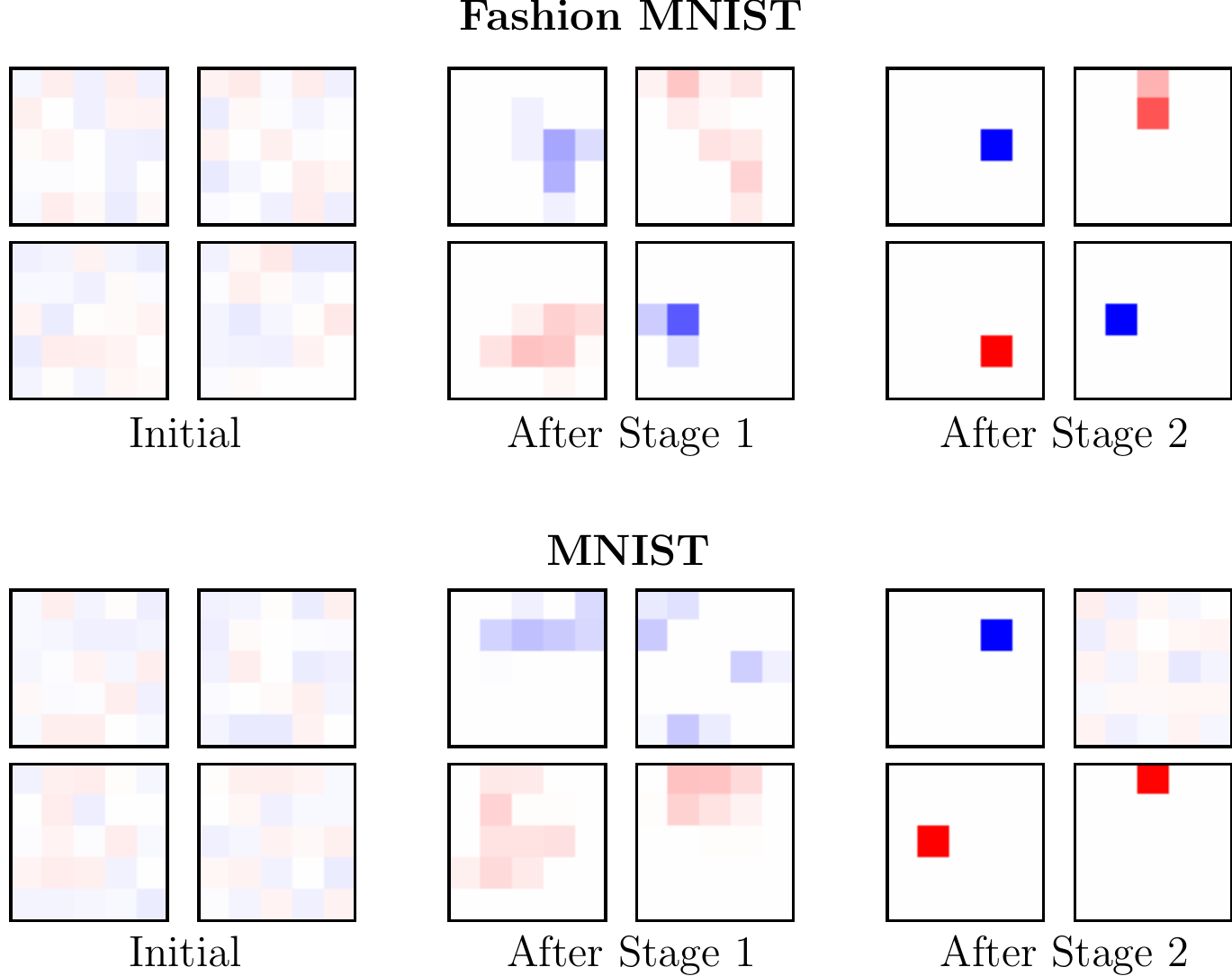}}

    \caption{Typical progression of front-end filters over stages.} 
    \label{fig:filters}

\end{figure}

We find that these three stages suffice for Fashion MNIST and MNIST, but depending on the dataset, one could potentially repeat Stage 2 with an increasing number of clusters until the desired level of polarization is achieved. Fig. \ref{fig:hist} demonstrates the effects bump regularizers have on the distribution of normalized activations.

Fig.~\ref{fig:filters} shows the filters obtained after each stage for MNIST and Fashion MNIST. 
Interestingly, we find that the learnt filters appear similar to pixel bases. This is consistent with the observations in \cite{madry} about first-layer filters learnt by adversarial training on MNIST.

\section{Experiments and Results}
\label{sec:results}

\subsection{Training Details}
\label{ssec:training_detail}

For a fair comparison we use the small convolutional neural network from \cite{madry}, consisting of two convolutional layers and two fully connected layers. Convolutional layers have 32 and 64 number of filters that are 5x5 in size. 
Each convolutional layer is followed by 2x2 maxpooling operation. Every layer except the last uses ReLU activation function. The outputs of the last layer are fed into a softmax function to generate classification probabilities. 
In every run, the model is trained for 20 epochs in each stage for a total of 60 epochs. 
Gradient descent is achieved using the Adam optimizer \cite{adam} with learning rate $10^{-3}$ and default hyperparameters in PyTorch library. 

During training with bump regularizers, stage 1 and stage 2 bump widths are picked to be $\sigma_1=0.35$ and $\sigma_2=0.15$ respectively. 
To make the adaptation of weights smoother, we increase the bump coefficient $\lambda$ linearly from 0 to 1 in each stage, as the stages progress. 
The quantization threshold is chosen to be $c = 0.3$ for Fashion MNIST and $c = 0.5$ for MNIST.
When adversarially training using the methods of Madry et al. \cite{madry} we use 10 restarts and 20 steps in each restart. 
\\[-3pt]

\noindent
{\bf Attack Setup:} We evaluate our defense against the white box attacks described in Section \ref{sec:background}: FGSM, BIM and PGD with Restarts. We use attack budget $\epsilon=0.3$ for MNIST and $\epsilon=0.1$ for Fashion MNIST. In iterative methods, we use step size 
$\alpha=\epsilon/10$. In BIM, we use 20 steps. In PGD we choose the best performing attack from 20 random restarts, with 100 steps in each restart.

\newcommand{\ra}[1]{\renewcommand{\arraystretch}{#1}}
\setlength{\tabcolsep}{0.24em}

\begin{table}[!b]\centering
\ra{1.3}
\begin{footnotesize}
\vskip -0.5em
\caption{Experimental results for different attacks.}
\label{table:results_table}
\begin{tabular}{@{}rcccccccccc@{}}
\\[-3pt]
\toprule
& \multicolumn{4}{c}{Fashion MNIST ($\epsilon=0.1$)} & \phantom{a}& \multicolumn{4}{c}{MNIST ($\epsilon=0.3$)} & \phantom{a}\\ \cmidrule{2-5} \cmidrule{7-10}
&  Clean & FGSM & BIM & PGD && Clean & FGSM & BIM  & PGD \\ \midrule
No defense & \multicolumn{1}{|c}{91.6} & 19.7 & 1.49 & 0.11 && 99.4 &  21.9 &  0.47  & 0.00 \\
Adv. Training  &  \multicolumn{1}{|c}{83.8} & 77.9 & 75.6 & 74.1 && 97.5 & 93.1 & 90.0 & 86.7 \\
Ours  & \multicolumn{1}{|c}{87.3} & 69.4 & 54.9 & 44.5 && 99.1 & 93.2 & 86.5 & 70.8 \\ \bottomrule
\end{tabular}
\end{footnotesize}
\end{table}

\begin{figure}[!t]
    \centerline{\includegraphics[width=0.93\columnwidth]{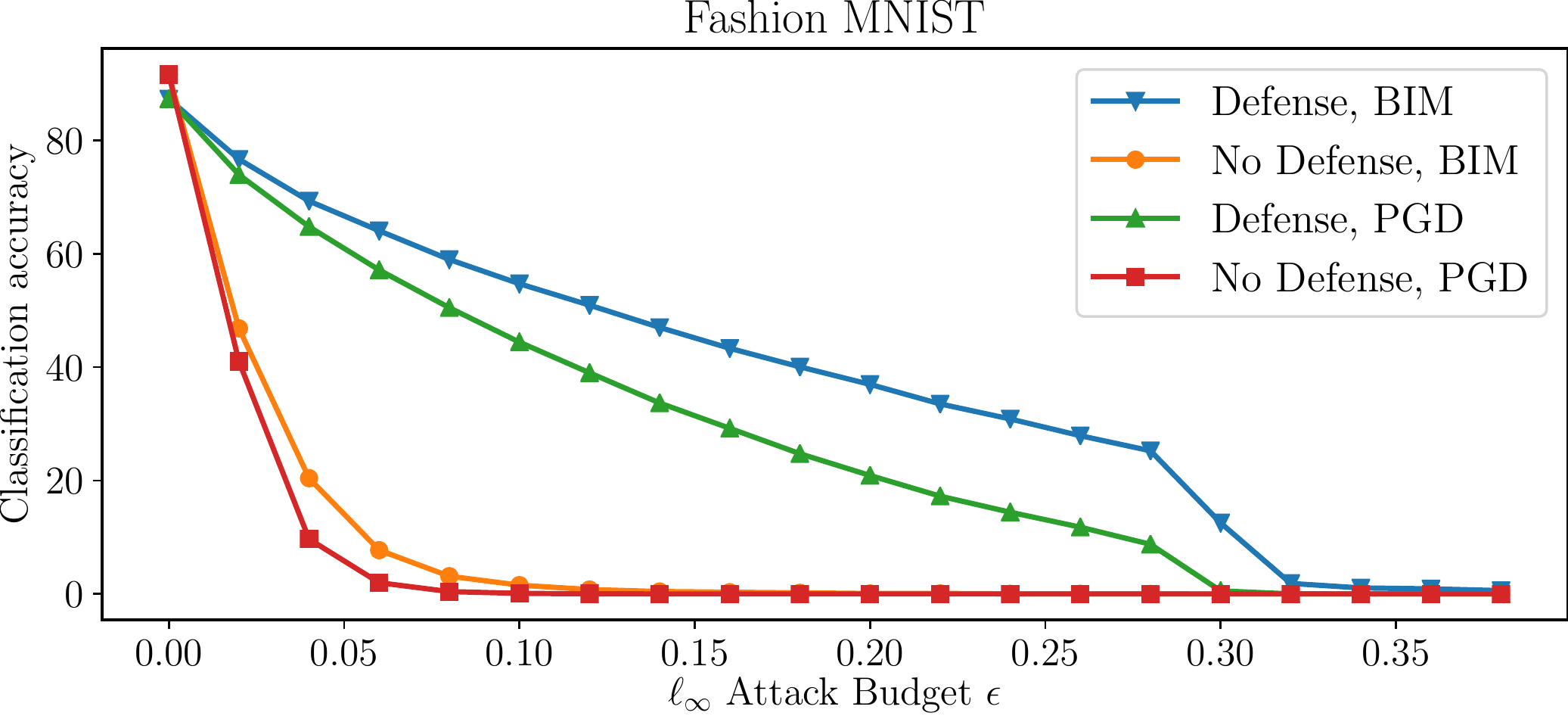}}
    \vspace{.2cm}
    \centerline{\includegraphics[width=0.93\columnwidth]{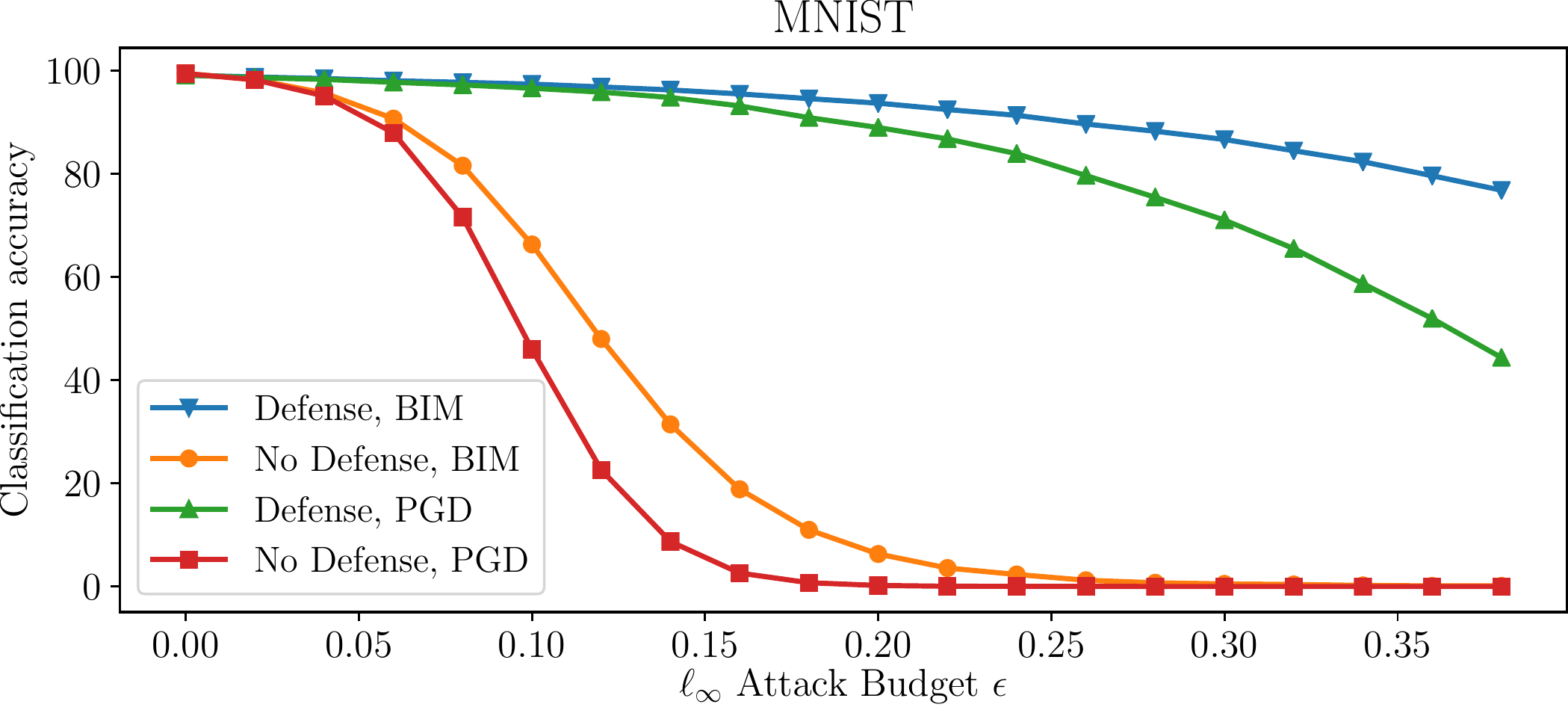}}
    \caption{Classification accuracy versus $\ell_\infty$ attack budget.} 
    \label{fig:epsilon}
\end{figure}

\subsection{Results and Discussion}

Fig. \ref{fig:epsilon} shows the effect of attack budget on accuracy, showing that our front end increases adversarial accuracy across a wide range of $\epsilon$. Table \ref{table:results_table} details our results against different attacks, with a comparison with methods from literature. 

Our defense significantly improves robustness against a variety of attacks, but falls short of the accuracies obtained by adversarial training. This is because perfect polarization is not possible in practice, leading to some leakage of adversarial perturbations through the front end. 
However,  the polarization approach is amenable to interpretation, and provides an avenue for further efforts in systematic bottom-up design. In contrast, empirical experiments are the only means of verifying the efficacy of state of the art adversarial training. 

\section{Conclusions}
\label{sec:discussion}

In this paper, we have shown that polarization is a promising tool for defense against adversarial attacks: when data is perfectly polarized, quantization can provably eliminate perturbations. Our training procedures for learning polarizing bases indicate that pixel bases are effective for polarizing datasets like MNIST and Fashion MNIST, which is consistent with the first-layer filters learnt in adversarially trained models for these datasets \cite{madry}. While we consider a supervised learning framework here, we have also obtained promising results with unsupervised learning of polarizing bases, but omit discussion due to space constraints. Open problems for future work include combining polarizing front ends with nonlinearities within the network in order to provably attenuate attacks as they flow through the network, and obtaining polarizing bases for more complex datasets such as CIFAR and ImageNet.

\section{Acknowledgements}
\label{sec:ack}

This work was supported in part by the Army Research Office under grant W911NF-19-1-0053, and by the National Science Foundation under grant CIF-1909320.

\clearpage
\bibliographystyle{IEEEbib}
\bibliography{ms.bbl}

\end{document}